\newtheorem{theorem}{Theorem}
\newtheorem{lemma}{Lemma}
\newcommand{\given}{\,|\,}
\definecolor{green}{rgb}{0.0, 0.5, 0.0}
\newcommand{\T}{\top}
\newcommand{\double}[1]{\mathbb{#1}}
\newcommand{\bs}{\backslash}
\title{On the Model Shrinkage Effect of\\Gamma Process Edge Partition Models}
\newcommand*{\affaddr}[1]{#1} 
\newcommand*{\affmark}[1][*]{\textsuperscript{#1}}
\author{
		Iku Ohama\affmark[$\star\ddagger$]\hspace{1cm}
		Issei Sato\affmark[$\dagger$]\hspace{1cm}
		Takuya Kida\affmark[$\ddagger$]\hspace{1cm}
		Hiroki Arimura\affmark[$\ddagger$]\\
		\affaddr{\affmark[$\star$]Panasonic Corp., Japan}\hspace{0.2cm}
		\affaddr{\affmark[$\dagger$]The Univ.~of Tokyo, Japan}\hspace{0.2cm}
		\affaddr{\affmark[$\ddagger$]Hokkaido Univ., Japan}\\
		ohama.iku@jp.panasonic.com\hspace{0.3cm}
		sato@k.u-tokyo.ac.jp\hspace{0.3cm}
		\{kida,arim\}@ist.hokudai.ac.jp
}
\begin{document}

\maketitle


\begin{abstract}
		The edge partition model (EPM) is a fundamental 
		Bayesian nonparametric model for extracting an overlapping structure 
		from binary matrix. 
		The EPM adopts a gamma process ($\Gamma$P) prior to 
		automatically shrink the number of active atoms. 
		However, we empirically found 
		that the model shrinkage of the EPM does not 
		typically work appropriately and leads to an overfitted solution. 
		An analysis of the expectation of the EPM's intensity function 
		suggested that the gamma priors for the EPM hyperparameters 
		disturb the model shrinkage effect of the internal $\Gamma$P. 
		In order to ensure that the model shrinkage effect of the EPM 
		works in an appropriate manner, 
		we proposed two novel generative constructions of the EPM: 
		CEPM incorporating constrained gamma priors, 
		and DEPM incorporating Dirichlet priors instead of the gamma priors. 
		Furthermore, 
		all DEPM's model parameters including the infinite atoms of the $\Gamma$P prior 
		could be marginalized out, 
		and thus it was possible to derive a truly infinite DEPM (IDEPM) 
		that can be efficiently inferred using a collapsed Gibbs sampler. 
		We experimentally confirmed that the model shrinkage of the proposed models 
		works well and that the IDEPM indicated state-of-the-art performance 
		in 
		generalization ability, link prediction accuracy, 
		mixing efficiency, and convergence speed. 
\end{abstract}

\section{Introduction}
%
Discovering low-dimensional structure from a binary matrix is 
an important problem in relational data analysis. 
Bayesian nonparametric priors, such as Dirichlet process (DP)~\cite{FERGUSON73} and 
hierarchical Dirichlet process (HDP)~\cite{TEH2006}, 
have been widely applied to construct 
statistical models with an automatic model shrinkage effect~\cite{IRM,MMSBM}. 
Recently, more advanced stochastic processes 
such as the Indian buffet process (IBP)~\cite{Griffiths2005} 
enabled the construction of statistical models 
for discovering overlapping structures~\cite{IBPIRM,ILA}, 
wherein each individual in a data matrix can belong to multiple latent classes. 

\begin{figure}[tb]
		\centering
		\includegraphics[width=0.98\columnwidth,clip]{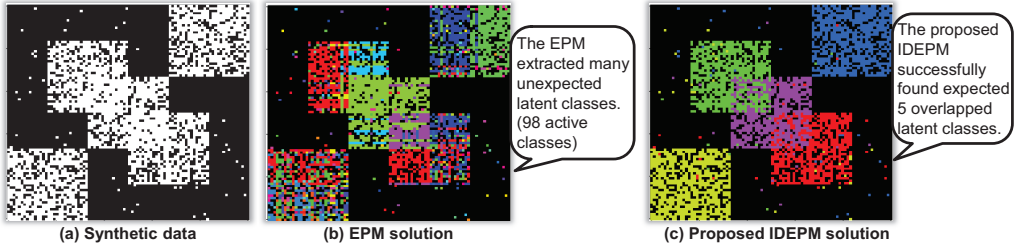}
		\caption{
				(Best viewed in color.) 
				A synthetic example: 
				(a) synthetic $90\times90$ data (white corresponds to one, and black to zero); 
				(b) EPM solution; and (c) the proposed IDEPM solution. 
				In (b) and (c), non-zero entries are colored to indicate their most 
				probable assignment to the latent classes. 
		}
		\label{fig:TOY}
\end{figure}
Among these models, the \emph{edge partition model} (EPM)~\cite{EPM} 
is a fundamental Bayesian nonparametric model for extracting 
overlapping latent structure underlying a given binary matrix. 
The EPM considers latent positive random counts for only non-zero entries 
in a given binary matrix 
and factorizes the count matrix into 
two non-negative matrices and a non-negative diagonal matrix. 
A link probability of the EPM for an entry is defined by 
transforming the multiplication of the non-negative matrices into a probability, 
and thus the EPM can capture overlapping structures with a noisy-OR manner~\cite{IBPIRM}. 
By incorporating a gamma process ($\Gamma$P) as a prior for the diagonal matrix, 
the number of active atoms of the EPM shrinks automatically according to the given data. 
Furthermore, by truncating the infinite atoms of the $\Gamma$P with a finite number, 
all parameters and hyperparameters of the EPM can be inferred using 
closed-form Gibbs sampler. 
Although, the EPM is well designed to capture an overlapping structure 
and has an attractive affinity with a closed-form posterior inference, 
the EPM involves a critical drawback in its model shrinkage mechanism. 
As we experimentally show in Sec.~\ref{sec:exp}, 
we found that the model shrinkage effect of the EPM does not typically work 
in an appropriate manner. 
Figure~\ref{fig:TOY} shows a synthetic example. 
As shown in Fig.~\ref{fig:TOY}a, there are five overlapping latent classes (white blocks). 
However, as shown in Fig.~\ref{fig:TOY}b, the EPM overestimates the number of 
active atoms (classes) and overfits the data.

In this paper, 
we analyze the undesired property of the 
EPM's model shrinkage mechanism 
and 
propose novel generative constructions for the EPM 
to overcome the aforementioned disadvantage. 
%
As shown in Fig.~\ref{fig:TOY}c, 
the IDEPM proposed in this paper successfully shrinks unnecessary atoms. 
More specifically, we have three major contributions in this paper. 

(1) 
We analyse the generative construction of the EPM 
and find a property that disturbs its 
model shrinkage effect (Sec.~\ref{sec:analysis}). 
We derive the expectation of the EPM's intensity function (Theorem~\ref{theorem1}), 
which is the total sum of the infinite atoms for an entry. 
From the derived expectation, 
we obtain a new finding that gamma priors for 
the EPM's hyperparameters 
disturb the model shrinkage effect of the internal $\Gamma$P (Theorem~\ref{theorem11}). 
That is, the derived expectation is expressed by a multiplication of the 
terms related to $\Gamma$P and other gamma priors. 
Thus, there is no guarantee that the expected number of active atoms 
is finite. 

(2) 
Based on the analysis of the EPM's intensity function, 
we propose two novel constructions of the EPM: 
the CEPM incorporating constrained gamma priors (Sec.~\ref{subsec:cepm}) and 
the DEPM incorporating Dirichlet priors 
instead of the gamma priors (Sec.~\ref{subsec:depm}). 
The model shrinkage effect of the CEPM and DEPM works appropriately because 
the expectation of their intensity functions depends only on the $\Gamma$P prior 
(Sec.~\ref{subsec:cepm} and Theorem~\ref{theorem2} in Sec.~\ref{subsec:depm}). 

(3) 
Furthermore, for the DEPM, 
all model parameters, including the infinite atoms of the $\Gamma$P prior, 
can be marginalized out (Theorem~\ref{theorem4}). 
Therefore, we can derive a truly infinite DEPM (IDEPM), 
which has a closed-form marginal likelihood without truncating infinite atoms, 
and can be efficiently inferred using collapsed Gibbs sampler~\cite{CGIBBS} 
(Sec.~\ref{subsec:idepm}). 


\section{The Edge Partition Model (EPM)}
\label{sec:epm}
In this section, 
we review the EPM~\cite{EPM} as a baseline model. 
Let $\bm{x}$ be an $I\times J$ binary matrix, where an entry between 
$i$-th row and $j$-th column is represented by $x_{i,j} \in \{0,1\}$. 
In order to extract an overlapping structure underlying $\bm{x}$, 
the EPM~\cite{EPM} considers 
a non-negative matrix factorization problem on latent Poisson counts as follows: 
\begin{gather}
		\label{eq:epm}
		x_{i,j} = \double{I}(m_{i,j,\cdot}\geq 1), ~~
		m_{i,j,\cdot} \given \bm{U}, \bm{V}, \bm{\lambda} \sim \mbox{Poisson}\left( 
		\sum_{k=1}^K 
		U_{i,k} V_{j,k} \lambda_k 
		\right)
		, 
\end{gather}
where $\bm{U}$ and $\bm{V}$ are $I\times K$ and $J\times K$ non-negative matrices, respectively, 
and $\bm{\lambda}$ is a $K\times K$ non-negative diagonal matrix. 
Note that $\double{I}(\cdot)$ is $1$ if the predicate holds and is zero otherwise. 
The latent counts $\bm{m}$ take positive values only for edges (non-zero entries) within a given binary matrix 
and the generative model for each positive count is equivalently 
expressed as a sum of $K$ Poisson random variables as 
$m_{i,j,\cdot} = \sum_k m_{i,j,k}, m_{i,j,k} \sim \mbox{Poisson}(U_{i,k} V_{j,k} \lambda_k)$. 
This is the reason why the above model is called edge partition model. 
Marginalizing $\bm{m}$ out from Eq.~\eqref{eq:epm}, the generative model of the EPM can be 
equivalently rewritten as 
$
		x_{i,j} \given \bm{U}, \bm{V}, \bm{\lambda} \sim 
		\mbox{Bernoulli}( 
		1-\prod_k e^{-U_{i,k} V_{j,k} \lambda_k}
		)
$. 
As $e^{-U_{i,k} V_{j,k} \lambda_k} \in [0,1]$ denotes the probability that 
a Poisson random variable with mean $U_{i,k} V_{j,k} \lambda_k$ 
corresponds to zero, 
the EPM can capture an overlapping structure with a noisy-OR manner \cite{IBPIRM}. 

In order to complete the Bayesian hierarchical model of the EPM, 
gamma priors are adopted as 
$U_{i,k} \sim \mbox{Gamma}(a_1,b_1)$ and 
$V_{j,k} \sim \mbox{Gamma}(a_2,b_2)$, 
where $a_1,a_2$ are shape parameters and $b_1,b_2$ are rate parameters for the gamma distribution, respectively. 
Furthermore, a gamma process ($\Gamma$P) is incorporated as a Bayesian nonparametric prior for $\bm{\lambda}$ 
to make the EPM automatically shrink its number of atoms $K$. 
Let $\mbox{Gamma}(\gamma_0/T, c_0)$ denote a truncated $\Gamma$P with 
a concentration parameter $\gamma_0$ and a rate parameter $c_0$, where 
$T$ denotes a truncation level that should be set large enough to ensure a 
good approximation to the true $\Gamma$P. 
Then, the diagonal elements of $\bm{\lambda}$ are drawn as $\lambda_k \sim \mbox{Gamma}(\gamma_0/T, c_0)$ for $k \in \{1,\ldots,T\}$. 

The posterior inference for all parameters and hyperparameters of the EPM 
can be performed using Gibbs sampler (detailed in Appendix~\ref{asec:epm}). 
Thanks to the conjugacy between gamma and Poisson distributions, 
given $m_{i,\cdot,k} = \sum_j m_{i,j,k}$ and $m_{\cdot,j,k} = \sum_i m_{i,j,k}$, 
posterior sampling for $U_{i,k}$ and $V_{j,k}$ is straightforward. 
As the $\Gamma$P prior is approximated by a gamma distribution, 
posterior sampling for $\lambda_k$ also can be performed straightforwardly. 
Given $\bm{U}$, $\bm{V}$, and $\bm{\lambda}$, 
posterior sample for $m_{i,j,\cdot}$ can be simulated using 
zero-truncated Poisson (ZTP) distribution~\cite{Geyer2007}. 
Finally, we can obtain sufficient statistics $m_{i,j,k}$ by partitioning 
$m_{i,j,\cdot}$ into $T$ atoms using a multinomial distribution. 
Furthermore, all hyperparameters of the EPM 
(i.e., $\gamma_0$, $c_0$, $a_1$, $a_2$, $b_1$, and $b_2$) 
can also be sampled by assuming a gamma hyper prior $\mbox{Gamma}(e_0,f_0)$. 
Thanks to the conjugacy between gamma distributions, 
posterior sampling for $c_0$, $b_1$, and $b_2$ is straightforward. 
For the remaining hyperparameters, 
we can construct closed-form Gibbs samplers using 
data augmentation techniques~\cite{NEWMAN2009,ESCOBAR1994,TEH2006}.

\section{Analysis for Model Shrinkage Mechanism}
\label{sec:analysis}
%
The EPM is well designed to capture an overlapping structure with a simple Gibbs inference. 
However, the EPM involves a critical drawback in its model shrinkage mechanism. 

For the EPM, a $\Gamma$P prior is incorporated as a prior for 
the non-negative diagonal matrix as $\lambda_k \sim \mbox{Gamma}(\gamma_0/T, c_0)$. 
From the form of the truncated $\Gamma$P, 
thanks to the additive property of independent gamma random variables, 
the total sum of $\lambda_k$ over countably infinite atoms follows 
a gamma distribution as $\sum_{k=1}^\infty \lambda_k \sim \mbox{Gamma}(\gamma_0, c_0)$, 
wherein the intensity function of the $\Gamma$P has a finite expectation 
as $\double{E}[\sum_{k=1}^\infty \lambda_k] = \frac{\gamma_0}{c_0}$. 
Therefore, the $\Gamma$P has a regularization mechanism that automatically 
shrinks the number of atoms according to given observations. 

However, as experimentally shown in Sec.~\ref{sec:exp}, 
the model shrinkage mechanism of the EPM does not work appropriately. 
More specifically, the EPM often overestimates the number of active atoms 
and overfits the data. 
Thus, we analyse the intensity function of the EPM 
to reveal the reason why the model shrinkage mechanism does not work appropriately. 
\begin{theorem}
		\label{theorem1}
		The expectation of the EPM's intensity function $\sum_{k=1}^\infty U_{i,k} V_{j,k} \lambda_k$ 
		for an entry $(i,j)$ is finite and can be expressed as follows: 
		\begin{align}
				\label{eq:theorem1}
				\double{E}\left[ \sum_{k=1}^\infty U_{i,k} V_{j,k} \lambda_k \right] = 
				\frac{a_1}{b_1} \times \frac{a_2}{b_2} \times \frac{\gamma_0}{c_0}. 
		\end{align}
\end{theorem}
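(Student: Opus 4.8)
The plan is to exploit the prior independence of the three factors $U_{i,k}$, $V_{j,k}$, and $\lambda_k$ together with linearity of expectation extended across the countable sum. First I would fix an atom index $k$ and note that, since the generative construction draws $U_{i,k} \sim \mbox{Gamma}(a_1,b_1)$, $V_{j,k} \sim \mbox{Gamma}(a_2,b_2)$, and $\lambda_k$ from the (truncated) $\Gamma$P independently of one another, the expectation of the product factorizes:
\[
\double{E}[U_{i,k} V_{j,k} \lambda_k] = \double{E}[U_{i,k}]\,\double{E}[V_{j,k}]\,\double{E}[\lambda_k] = \frac{a_1}{b_1}\cdot\frac{a_2}{b_2}\cdot\double{E}[\lambda_k],
\]
where the two closed forms use the standard mean $a/b$ of a $\mbox{Gamma}(a,b)$ variable in the rate parameterization. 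It is worth emphasizing that this is a \emph{prior} expectation, so the three factors are genuinely independent and no data-induced coupling between $\bm{U}$, $\bm{V}$, and $\bm{\lambda}$ enters the computation.

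Next I would interchange the infinite summation and the expectation. Because every summand $U_{i,k} V_{j,k} \lambda_k$ is non-negative, Tonelli's theorem (equivalently, monotone convergence applied to the partial sums) licenses
\[
\double{E}\left[ \sum_{k=1}^\infty U_{i,k} V_{j,k} \lambda_k \right] = \sum_{k=1}^\infty \double{E}[U_{i,k} V_{j,k} \lambda_k] = \frac{a_1}{b_1}\cdot\frac{a_2}{b_2}\sum_{k=1}^\infty \double{E}[\lambda_k].
\]
The remaining sum is then supplied by the additive property of independent gamma variables already recorded before the theorem statement: in the $T\to\infty$ limit the total mass satisfies $\sum_{k=1}^\infty \lambda_k \sim \mbox{Gamma}(\gamma_0,c_0)$, so $\sum_{k=1}^\infty \double{E}[\lambda_k] = \double{E}[\sum_{k=1}^\infty \lambda_k] = \frac{\gamma_0}{c_0}$. (Equivalently, one can sum the truncated-level means $\double{E}[\lambda_k] = \frac{\gamma_0}{Tc_0}$ over $T$ atoms before taking the limit.) Substituting this value gives the claimed product $\frac{a_1}{b_1}\cdot\frac{a_2}{b_2}\cdot\frac{\gamma_0}{c_0}$, which is in particular finite.

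The only step demanding genuine care is the interchange of the infinite sum and the expectation, since the number of atoms is countably infinite; however, the uniform non-negativity of the three factors makes Tonelli directly applicable, so no delicate tail estimate or dominated-convergence bound is required. I therefore expect the entire argument to be short, with the $\Gamma$P total-mass identity doing the only nontrivial work and being imported wholesale from the preceding discussion. The conceptual payoff, which the paper then develops in Theorem~\ref{theorem11}, is precisely that the answer \emph{factorizes} into a $\Gamma$P term $\frac{\gamma_0}{c_0}$ multiplied by the two independent gamma-prior terms $\frac{a_1}{b_1}$ and $\frac{a_2}{b_2}$, and it is this extra multiplicative dependence on the hyperpriors that will be identified as the source of the disturbed shrinkage.
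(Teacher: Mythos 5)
Your proof is correct and follows essentially the same route as the paper's: factorize $\double{E}[U_{i,k}V_{j,k}\lambda_k]$ by prior independence, interchange the infinite sum with the expectation, and invoke the $\Gamma$P total-mass identity $\double{E}[\sum_{k=1}^\infty \lambda_k]=\gamma_0/c_0$. The only difference is that you explicitly justify the interchange via Tonelli/monotone convergence, which the paper leaves implicit under the phrase ``law of total expectation.''
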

\begin{proof}
		As $\bm{U}$, $\bm{V}$, and $\bm{\lambda}$ are independent of each other, 
		the expected value operator is multiplicative for the EPM's intensity function. 
		Using the multiplicativity and the low of total expectation, 
		the proof is completed as 
		$
				\double{E}\left[ \sum_{k=1}^\infty U_{i,k} V_{j,k} \lambda_k \right] = 
				\sum_{k=1}^\infty \double{E}[U_{i,k}] \double{E}[V_{j,k}] \double{E}[\lambda_k] = 
				\frac{a_1}{b_1} \times \frac{a_2}{b_2} \times \double{E}[\sum_{k=1}^\infty \lambda_k]. 
		$
\end{proof}
As Eq.~\eqref{eq:theorem1} in Theorem~\ref{theorem1} shows, 
the expectation of the EPM's intensity function 
is expressed by multiplying individual expectations of 
a $\Gamma$P and two gamma distributions. 
This causes an undesirable property to the model shrinkage effect of the EPM. 
From Theorem~\ref{theorem1}, 
another important theorem about the EPM's model shrinkage effect is obtained as follows: 
\begin{theorem}
		\label{theorem11}
		Given an arbitrary non-negative constant $C$, 
		even if the expectation of the EPM's intensity function in Eq.~\eqref{eq:theorem1} 
		is fixed as 
		$\double{E}\left[ \sum_{k=1}^\infty U_{i,k} V_{j,k} \lambda_k \right] = C$, 
		there exist cases in which 
		the model shrinkage effect of the $\Gamma$P prior disappears. 
\end{theorem}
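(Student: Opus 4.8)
The plan is to exploit the multiplicative structure of the intensity expectation established in Theorem~\ref{theorem1}. Since the hypothesis fixes only the \emph{product} $\frac{a_1}{b_1}\cdot\frac{a_2}{b_2}\cdot\frac{\gamma_0}{c_0}=C$, the three factors remain free to trade off against one another. My strategy is to construct an explicit one-parameter family of hyperparameter settings that holds this product equal to $C$ while driving the $\Gamma$P's own contribution $\frac{\gamma_0}{c_0}$ — the quantity the preceding discussion identifies as the source of the shrinkage regularization — to infinity.

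First I would solve the constraint for the $\Gamma$P factor, obtaining $\frac{\gamma_0}{c_0}=C\cdot\frac{b_1}{a_1}\cdot\frac{b_2}{a_2}$. This makes explicit that $\frac{\gamma_0}{c_0}$ is not pinned down by $C$ alone: it can be inflated without bound by shrinking the gamma-prior means $\frac{a_1}{b_1}$ and $\frac{a_2}{b_2}$. Concretely, I would fix $a_2,b_2,c_0$, set $\frac{a_1}{b_1}=\frac{C\,c_0\,b_2}{\gamma_0\,a_2}$, and let $\gamma_0\to\infty$. By construction the product equals $C$ for every value of $\gamma_0$, so the observable expected intensity stays fixed at $C$ as the hypothesis demands, yet the internal $\Gamma$P mass $\mathbb{E}[\sum_{k=1}^\infty\lambda_k]=\frac{\gamma_0}{c_0}$ diverges.

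I would then argue that this divergence destroys the regularization. The paper's shrinkage claim rests precisely on $\frac{\gamma_0}{c_0}$ being finite — a finite total mass forces all but finitely many $\lambda_k$ to be negligible, keeping the active-atom count bounded. Tying $\frac{\gamma_0}{c_0}\to\infty$ (equivalently $\gamma_0\to\infty$ at fixed $c_0$) to the $\Gamma$P's expected-mass / L\'evy characterization, the number of non-negligible atoms is no longer controlled, so the automatic shrinkage vanishes. The explicit family therefore furnishes the required ``cases'' in which the effect disappears despite $C$ being held constant.

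The main obstacle is conceptual rather than computational: in every member of the family the expected intensity $C$ is finite, so nothing appears amiss at the level of the hypothesis, and I must make precise what ``the shrinkage effect disappears'' means. The resolution is to separate the \emph{observable} aggregate intensity $C$ from the \emph{internal} regularizing quantity $\frac{\gamma_0}{c_0}$, and to establish that a diverging $\frac{\gamma_0}{c_0}$ genuinely breaks the finite-atom guarantee rather than merely rescaling the atoms. Once that link between $\frac{\gamma_0}{c_0}$ and the active-atom count is pinned down, the explicit construction immediately yields the claim.
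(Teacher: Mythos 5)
Your proposal is correct and follows essentially the same route as the paper: both exploit the fact that the constraint $C = \frac{a_1}{b_1}\cdot\frac{a_2}{b_2}\cdot\frac{\gamma_0}{c_0}$ pins down only the product, so the gamma-prior means can absorb any rescaling and $\gamma_0$ can be made arbitrarily large. The only cosmetic difference is the final step: the paper instantiates $\gamma_0 = T\widehat{\gamma}_0$ so that the truncated prior $\mbox{Gamma}(\gamma_0/T, c_0)$ visibly degenerates to a fixed $\mbox{Gamma}(\widehat{\gamma}_0, c_0)$ per atom, whereas you argue via the divergence of $\double{E}[\sum_k \lambda_k] = \gamma_0/c_0$; these express the same loss of shrinkage.
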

\begin{proof}
		Substituting 
		$\double{E}\left[ \sum_{k=1}^\infty U_{i,k} V_{j,k} \lambda_k \right] = C$ 
		for Eq.~\eqref{eq:theorem1}, 
		we obtain $C = \frac{a_1}{b_1} \times \frac{a_2}{b_2} \times \frac{\gamma_0}{c_0}$. 
		Since $a_1$, $a_2$, $b_1$, and $b_2$ are gamma random variables, 
		even if the expectation of the EPM's intensity function, $C$, is fixed, 
		$\frac{\gamma_0}{c_0}$ can take an arbitrary value so that 
		equation $C = \frac{a_1}{b_1} \times \frac{a_2}{b_2} \times \frac{\gamma_0}{c_0}$ 
		holds. 
		Hence, $\gamma_0$ can take an arbitrary large value 
		such that $\gamma_0 = T\times \widehat{\gamma}_0$. 
		This implies that the $\Gamma$P prior for the EPM degrades to 
		a gamma distribution without model shrinkage effect as 
		$\lambda_k \sim \mbox{Gamma}(\gamma_0/T, c_0) = \mbox{Gamma}(\widehat{\gamma}_0,c_0)$. 
\end{proof}
Theorem~\ref{theorem11} indicates that 
the EPM might overestimate the number of active atoms, 
and lead to overfitted solutions.


\section{Proposed Generative Constructions}
\label{sec:proposal}
%
We describe our novel generative constructions for the EPM with an appropriate model shrinkage effect. 
According to the analysis described in Sec.~\ref{sec:analysis}, 
the model shrinkage mechanism of the EPM does not work 
because the expectation of the EPM's intensity function has 
an undesirable redundancy. 
This finding motivates the proposal of new generative constructions, 
in which the expectation of the intensity function depends only on the $\Gamma$P prior. 

First, we propose a naive extension of the original EPM 
using constrained gamma priors (termed as CEPM). 
Next, we propose an another generative construction for the EPM 
by incorporating Dirichlet priors instead of gamma priors (termed as DEPM). 
Furthermore, for the DEPM, 
we derive truly infinite DEPM (termed as IDEPM) by marginalizing out all model parameters including the 
infinite atoms of the $\Gamma$P prior.

\subsection{CEPM}
\label{subsec:cepm}
In order to ensure that the EPM's intensity function depends 
solely on the $\Gamma$P prior, 
a naive way is to introduce constraints for the hyperparameters of the gamma prior. 
In the CEPM, the rate parameters of the gamma priors are constrained as 
$b_1 = C_1 \times a_1$ and $b_2 = C_2 \times a_2$, respectively, 
where $C_1 > 0$ and $C_2 > 0$ are arbitrary constants. 
Based on the aforementioned constraints and Theorem~\ref{theorem1}, 
the expectation of the intensity function for the CEPM 
depends only on the $\Gamma$P prior as 
$\double{E}[\sum_{k=1}^\infty U_{i,k} V_{j,k} \lambda_k] = \frac{\gamma_0}{C_1 C_2 c_0}$. 

The posterior inference for the CEPM can be performed using Gibbs sampler 
in a manner similar to that for the EPM. 
However, we can not derive closed-form samplers only for 
$a_1$ and $a_2$ because of the constraints. 
Thus, in this paper, posterior sampling for $a_1$ and $a_2$ are 
performed using grid Gibbs sampling~\cite{Zhou2014} 
(see Appendix~\ref{asec:cepm} for details).

\subsection{DEPM}
\label{subsec:depm}
We have another strategy to construct the EPM with efficient model shrinkage effect 
by re-parametrizing the factorization problem. 
Let us denote transpose of a matrix $\bm{A}$ by $\bm{A}^\top$. 
According to the generative model of the EPM in Eq.~\eqref{eq:epm}, 
the original generative process for counts $\bm{m}$ 
can be viewed as a matrix factorization as 
$\bm{m} \approx \bm{U} \bm{\lambda} \bm{V}^\T$. 
It is clear that the optimal solution of 
the factorization problem is not unique. 
Let $\bm{\Lambda}_1$ and $\bm{\Lambda}_2$ be arbitrary $K\times K$ non-negative diagonal matrices. 
If a solution $\bm{m} \approx \bm{U}\bm{\lambda}\bm{V}^\T$ is globally optimal, 
then another solution 
$\bm{m} \approx (\bm{U}\bm{\Lambda}_1) (\bm{\Lambda}_1^{-1} \bm{\lambda} \bm{\Lambda}_2) (\bm{V}\bm{\Lambda}_2^{-1})^\T$ 
is also optimal. 
In order to ensure that the EPM has only one optimal solution, 
we re-parametrize the original factorization problem to 
an equivalent constrained factorization problem as follows: 
\begin{gather}
		\label{eq:depm}
		\bm{m} \approx \bm{\phi} \bm{\lambda} \bm{\psi}^\T, 
\end{gather}
where $\bm{\phi}$ denotes an $I \times K$ non-negative matrix with $l_1$-constraints as $\sum_i \phi_{i,k} = 1, \forall k$. 
Similarly, $\bm{\psi}$ denotes an $J \times K$ non-negative matrix with $l_1$-constraints as $\sum_j \psi_{j,k} = 1, \forall k$. 
This parameterization ensures the uniqueness of the optimal solution 
for a given $\bm{m}$ because 
each column of $\bm{\phi}$ and $\bm{\psi}$ is constrained 
such that it is defined on a simplex. 

According to the factorization in Eq.~\eqref{eq:depm}, 
by incorporating Dirichlet priors instead of gamma priors, 
the generative construction for $\bm{m}$ of the DEPM is as follows: 
\begin{align}
		\label{eq:gen_depm}
		m_{i,j,\cdot} \given \bm{\phi}, \bm{\psi}, \bm{\lambda} \sim \mbox{Poisson}
		\left(
		\sum_{k=1}^T \phi_{i,k} \psi_{j,k} \lambda_k 
		\right),&~~
		\{\phi_{i,k}\}_{i=1}^I \given \alpha_1 \sim \mbox{Dirichlet}(\overbrace{\alpha_1,\ldots,\alpha_1}^I), \nonumber\\
		\{\psi_{j,k}\}_{j=1}^J \given \alpha_2 \sim \mbox{Dirichlet}(\overbrace{\alpha_2,\ldots,\alpha_2}^J),&~~
		\lambda_k \given \gamma_0, c_0 \sim \mbox{Gamma}(\gamma_0/T, c_0). 
\end{align}
\begin{theorem}
		\label{theorem2}
		The expectation of DEPM's intensity function $\sum_{k=1}^\infty \phi_{i,k} \psi_{j,k} \lambda_k$ 
		depends sorely on the $\Gamma$P prior and can be expressed as 
		$\double{E}[\sum_{k=1}^\infty \phi_{i,k} \psi_{j,k} \lambda_k] = \frac{\gamma_0}{IJ c_0}$. 
\end{theorem}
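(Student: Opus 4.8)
The plan is to mirror the proof of Theorem~\ref{theorem1}, exploiting the mutual independence of $\bm{\phi}$, $\bm{\psi}$, and $\bm{\lambda}$ under the generative construction in Eq.~\eqref{eq:gen_depm}. Since these three collections of variables are drawn independently, the expectation operator is multiplicative across them, so I would first write
\begin{align*}
		\double{E}\left[ \sum_{k=1}^\infty \phi_{i,k} \psi_{j,k} \lambda_k \right] = \sum_{k=1}^\infty \double{E}[\phi_{i,k}] \, \double{E}[\psi_{j,k}] \, \double{E}[\lambda_k],
\end{align*}
combining the multiplicativity over independent factors with the law of total expectation, exactly as in Theorem~\ref{theorem1}.

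The only genuinely new ingredient is the evaluation of the Dirichlet marginals. The key step is to recall the mean of a symmetric Dirichlet distribution: since $\{\phi_{i,k}\}_{i=1}^I \sim \mbox{Dirichlet}(\alpha_1,\ldots,\alpha_1)$ has $I$ equal concentration parameters summing to $I\alpha_1$, each marginal satisfies $\double{E}[\phi_{i,k}] = \frac{\alpha_1}{I\alpha_1} = \frac{1}{I}$, and analogously $\double{E}[\psi_{j,k}] = \frac{1}{J}$. Crucially, these values do not depend on $\alpha_1$ or $\alpha_2$; this is precisely the structural property that eliminates the redundancy identified in Theorem~\ref{theorem11}, replacing the free factors $\frac{a_1}{b_1}$ and $\frac{a_2}{b_2}$ of the original EPM by the fixed constants $\frac{1}{I}$ and $\frac{1}{J}$.

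Substituting these marginals and pulling $\frac{1}{I}$ and $\frac{1}{J}$ out of the sum leaves $\double{E}[\sum_{k=1}^\infty \phi_{i,k} \psi_{j,k} \lambda_k] = \frac{1}{I} \times \frac{1}{J} \times \double{E}[\sum_{k=1}^\infty \lambda_k]$, and I would close the argument by invoking the already-established $\Gamma$P identity $\double{E}[\sum_{k=1}^\infty \lambda_k] = \frac{\gamma_0}{c_0}$ from the discussion preceding Theorem~\ref{theorem1}, yielding $\frac{\gamma_0}{IJc_0}$ as claimed. I anticipate no serious obstacle here, as the computation is routine once independence is invoked; the only points meriting a moment of care are the interchange of expectation with the infinite sum (justified by nonnegativity of every summand) and the observation that the symmetric-Dirichlet mean is independent of its concentration parameter, which is exactly what makes the DEPM's shrinkage depend solely on the $\Gamma$P.
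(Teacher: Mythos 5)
Your proof is correct and follows essentially the same route as the paper's: independence of $\bm{\phi}$, $\bm{\psi}$, $\bm{\lambda}$ gives multiplicativity of the expectation, the symmetric-Dirichlet means evaluate to $\frac{1}{I}$ and $\frac{1}{J}$, and the $\Gamma$P identity $\double{E}[\sum_k \lambda_k]=\frac{\gamma_0}{c_0}$ finishes the computation. Your added remarks on the interchange of expectation and sum and on the concentration-parameter independence of the Dirichlet mean are sound elaborations of what the paper leaves implicit.
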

\begin{proof}
		The expectations of Dirichlet random variables $\phi_{i,k}$ and $\psi_{j,k}$ are $\frac{1}{I}$ and $\frac{1}{J}$, respectively. 
		Similar to the proof for Theorem~\ref{theorem1}, 
		using the multiplicativity of independent random variables and the low of total expectation, 
		the proof is completed as 
		$
				\double{E}\left[ \sum_{k=1}^\infty \phi_{i,k} \psi_{j,k} \lambda_k \right] = 
				\sum_{k=1}^\infty \double{E}[\phi_{i,k}] \double{E}[\psi_{j,k}] \double{E}[\lambda_k] = 
				\frac{1}{I} \times \frac{1}{J} \times \double{E}[\sum_{k=1}^\infty \lambda_k]. 
		$
\end{proof}
Note that, 
if we set constants $C_1=I$ and $C_2=J$ for the CEPM in Sec.~\ref{subsec:cepm}, 
then the expectation of the intensity function for the CEPM is equivalent to 
that for the DEPM in Theorem~\ref{theorem2}. 
Thus, in order to ensure the fairness of comparisons, 
we set $C_1=I$ and $C_2=J$ for the CEPM in the experiments.


As the Gibbs sampler for $\bm{\phi}$ and $\bm{\psi}$ can be 
derived straightforwardly, 
the posterior inference for all parameters and hyperparameters of the DEPM 
also can be performed via closed-form Gibbs sampler (detailed in Appendix~\ref{asec:depm}). 
Differ from the CEPM, 
$l_1$-constraints in the DEPM ensure the uniqueness of its optimal solution. 
Thus, the inference for the DEPM 
is considered as more efficient than that for the CEPM.

\subsection{Truly Infinite DEPM (IDEPM)}
\label{subsec:idepm}
One remarkable property of the DEPM is 
that we can derive a fully marginalized likelihood function. 
%
Similar to the beta-negative binomial topic model~\cite{Zhou2014}, 
we consider a joint distribution for $m_{i,j,\cdot}$ Poisson customers and 
their assignments $\bm{z}_{i,j}=\{z_{i,j,s}\}_{s=1}^{m_{i,j,\cdot}} \in \{1,\cdots,T\}^{m_{i,j,\cdot}}$ to $T$ tables as 
$P(m_{i,j,\cdot}, \bm{z}_{i,j} \given \bm{\phi},\bm{\psi},\bm{\lambda}) = 
P(m_{i,j,\cdot} \given \bm{\phi},\bm{\psi},\bm{\lambda}) \prod_{s=1}^{m_{i,j,\cdot}} P(z_{i,j,s} \given m_{i,j,\cdot}, \bm{\phi},\bm{\psi},\bm{\lambda})$. 
Thanks to the $l_1$-constraints we introduced in Eq.~\eqref{eq:depm}, 
the joint distribution $P(\bm{m},\bm{z}\given \bm{\phi},\bm{\psi},\bm{\lambda})$ has 
a fully factorized form (see Lemma~\ref{lemma1} in Appendix~\ref{asec:pot4}). 
Therefore, marginalizing $\bm{\phi}$, $\bm{\psi}$, and $\bm{\lambda}$ out 
according to the prior construction in Eq.~\eqref{eq:gen_depm}, 
we obtain an analytical marginal likelihood 
$P(\bm{m},\bm{z})$ 
for the truncated DEPM (see Appendix~\ref{asec:pot4} for a detailed derivation).


Furthermore, by taking $T \rightarrow \infty$, 
we can derive a closed-form marginal likelihood 
for the truly infinite version of the DEPM (termed as IDEPM). 
In a manner similar to that in~\cite{Griffiths2011}, 
we consider the likelihood function for partition $[\bm{z}]$ instead of the 
assignments $\bm{z}$. 
Assume we have $K_+$ of $T$ atoms for which $m_{\cdot,\cdot,k} = \sum_i \sum_j m_{i,j,k} > 0$, 
and a partition of $M (=\sum_i \sum_j m_{i,j,\cdot})$ customers into $K_+$ subsets. 
Then, joint marginal likelihood of the IDEPM for 
$[\bm{z}]$ and $\bm{m}$ 
is given by the following theorem, with the proof provided in Appendix~\ref{asec:pot4}: 
\begin{theorem}
		\label{theorem4}
		The marginal likelihood function of the IDEPM is defined as 
		$P(\bm{m},[\bm{z}])_\infty = \lim_{T\rightarrow \infty} P(\bm{m},[\bm{z}]) = 
		\lim_{T\rightarrow \infty} \frac{T!}{(T-K_+)!} P(\bm{m},\bm{z})$, 
		and can be derived as follows: 
		\begin{align}
				\label{eq:depm_mll_infinite}
				P(\bm{m},[\bm{z}]&)_\infty
				=
				\prod_{i=1}^I \prod_{j=1}^J 
				\frac{1}{m_{i,j,\cdot}!} 
				\times 
				\prod_{k=1}^{K_+}
				\frac{
						\Gamma(I \alpha_1)
				}{
						\Gamma(I \alpha_1 + m_{\cdot,\cdot,k})
				}
				\prod_{i=1}^I 
				\frac{
						\Gamma(\alpha_1 + m_{i,\cdot,k})
				}{
						\Gamma(\alpha_1)
				} \nonumber\\
				\times& 
				\prod_{k=1}^{K_+}
				\frac{
						\Gamma(J \alpha_2)
				}{
						\Gamma(J \alpha_2 + m_{\cdot,\cdot,k})
				}
				\prod_{j=1}^J 
				\frac{
						\Gamma(\alpha_2 + m_{\cdot,j,k})
				}{
						\Gamma(\alpha_2)
				}
				\times
				\gamma_0^{K_+}
				\left(
				\frac{c_0}{c_0+1}
				\right)^{\gamma_0}
				\prod_{k=1}^{K_+}
				\frac{
						\Gamma(m_{\cdot,\cdot,k})
				}{
						(c_0+1)^{m_{\cdot,\cdot,k}}
				}, 
		\end{align}
		where $m_{i,\cdot,k} = \sum_j m_{i,j,k}$, $m_{\cdot,j,k} = \sum_i m_{i,j,k}$, and $m_{\cdot,\cdot,k} = \sum_i \sum_j m_{i,j,k}$. Note that $\Gamma(\cdot)$ denotes gamma function. 
\end{theorem}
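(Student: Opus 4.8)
The plan is to first obtain a closed-form expression for the truncated marginal likelihood $P(\bm{m},\bm{z})$ at finite truncation level $T$, and then carefully take the limit $T\to\infty$ after multiplying by the combinatorial factor $\frac{T!}{(T-K_+)!}$. The starting point is the fully factorized conditional joint distribution of Lemma~\ref{lemma1}. Writing the Poisson likelihood for $m_{i,j,\cdot}$ together with the per-customer table assignments, the product $\prod_s \phi_{i,z_{i,j,s}}\psi_{j,z_{i,j,s}}\lambda_{z_{i,j,s}}$ collapses to $\prod_k(\phi_{i,k}\psi_{j,k}\lambda_k)^{m_{i,j,k}}$, and the crucial simplification is that the exponential term $\exp(-\sum_{i,j,k}\phi_{i,k}\psi_{j,k}\lambda_k)$ reduces to $\exp(-\sum_k\lambda_k)$ because the $l_1$-constraints give $\sum_i\phi_{i,k}=\sum_j\psi_{j,k}=1$. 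This yields a form that factorizes cleanly across $\bm{\phi}$, $\bm{\psi}$, and $\bm{\lambda}$, with the leading $\prod_{i,j}1/m_{i,j,\cdot}!$ coming from the Poisson normalizations.

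Next I would marginalize each group of parameters using standard conjugacy. Integrating each Dirichlet column $\{\phi_{i,k}\}_i$ against the monomial $\prod_i\phi_{i,k}^{m_{i,\cdot,k}}$ produces the Dirichlet--multinomial ratio $\frac{\Gamma(I\alpha_1)}{\Gamma(I\alpha_1+m_{\cdot,\cdot,k})}\prod_i\frac{\Gamma(\alpha_1+m_{i,\cdot,k})}{\Gamma(\alpha_1)}$, and likewise for $\bm{\psi}$ with $\alpha_2$ and $J$. Integrating each $\lambda_k\sim\mbox{Gamma}(\gamma_0/T,c_0)$ against $\lambda_k^{m_{\cdot,\cdot,k}}e^{-\lambda_k}$ gives $\frac{c_0^{\gamma_0/T}}{\Gamma(\gamma_0/T)}\frac{\Gamma(\gamma_0/T+m_{\cdot,\cdot,k})}{(c_0+1)^{\gamma_0/T+m_{\cdot,\cdot,k}}}$. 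I would then observe that every atom with $m_{\cdot,\cdot,k}=0$ contributes exactly $1$ to both Dirichlet factors, so only the $K_+$ active atoms survive there, while the $T-K_+$ empty atoms contribute solely through the $\lambda$ product.

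The limit is the heart of the argument, and the main obstacle lies in correctly balancing the vanishing and diverging factors. For the $T-K_+$ empty atoms the $\lambda$ integral collapses to $\bigl(\tfrac{c_0}{c_0+1}\bigr)^{\gamma_0/T}$, so their product is $\bigl(\tfrac{c_0}{c_0+1}\bigr)^{(T-K_+)\gamma_0/T}\to\bigl(\tfrac{c_0}{c_0+1}\bigr)^{\gamma_0}$ since $(T-K_+)/T\to1$. For each of the $K_+$ occupied atoms I would use the small-argument asymptotics $\Gamma(\gamma_0/T)\sim T/\gamma_0$ together with $\Gamma(\gamma_0/T+m_{\cdot,\cdot,k})\to\Gamma(m_{\cdot,\cdot,k})$ and $c_0^{\gamma_0/T},(c_0+1)^{\gamma_0/T}\to1$, so that each active $\lambda$ factor behaves like $\tfrac{\gamma_0}{T}\frac{\Gamma(m_{\cdot,\cdot,k})}{(c_0+1)^{m_{\cdot,\cdot,k}}}$, contributing an overall $(\gamma_0/T)^{K_+}$. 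The combinatorial prefactor $\frac{T!}{(T-K_+)!}=T(T-1)\cdots(T-K_++1)\sim T^{K_+}$ then cancels this $T^{-K_+}$ exactly, leaving $\gamma_0^{K_+}$. Collecting the surviving Dirichlet factors over $k=1,\dots,K_+$, the limiting $\bigl(\tfrac{c_0}{c_0+1}\bigr)^{\gamma_0}$, the $\gamma_0^{K_+}$, the $\prod_k\Gamma(m_{\cdot,\cdot,k})(c_0+1)^{-m_{\cdot,\cdot,k}}$, and the leading $\prod_{i,j}1/m_{i,j,\cdot}!$ reproduces Eq.~\eqref{eq:depm_mll_infinite}. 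The delicate point throughout is keeping the per-atom $T^{-1}$ rate from the diverging $\Gamma(\gamma_0/T)$ aligned with the $T^{K_+}$ growth of the partition-counting factor, which is precisely why the likelihood must be written for the partition $[\bm{z}]$ rather than the labeled assignment $\bm{z}$.
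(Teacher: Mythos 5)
Your proposal is correct and follows essentially the same route as the paper: establish the fully factorized conditional via the $l_1$-constraints (the paper's Lemma~\ref{lemma1}), marginalize by Dirichlet and gamma conjugacy to get the truncated marginal likelihood, and then take $T\to\infty$ with the empty atoms supplying $\left(\frac{c_0}{c_0+1}\right)^{\gamma_0}$ and the per-active-atom factor $\Gamma(\gamma_0/T)^{-1}\sim\gamma_0/T$ cancelling against $\frac{T!}{(T-K_+)!}\sim T^{K_+}$. The only cosmetic difference is that the paper rewrites $\frac{\Gamma(\gamma_0/T+m_{\cdot,\cdot,k})}{\Gamma(\gamma_0/T)}$ exactly as $\frac{\gamma_0}{T}\prod_{l=1}^{m_{\cdot,\cdot,k}-1}(l+\gamma_0/T)$ before passing to the limit, whereas you invoke the equivalent asymptotics directly.
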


From Eq.~\eqref{eq:depm_mll_infinite} in Theorem~\ref{theorem4}, 
we can derive collapsed Gibbs sampler~\cite{CGIBBS} 
to perform posterior inference for the IDEPM. 
Since $\bm{\phi}$, $\bm{\psi}$, and $\bm{\lambda}$ have been marginalized out, 
the only latent variables we have to update are $\bm{m}$ and $\bm{z}$. 

\paragraph{Sampling $\bm{z}$:}
Given $\bm{m}$, 
similar to the Chinese restaurant process (CRP)~\cite{CRP1}, 
the posterior probability that $z_{i,j,s}$ is assigned to $k^*$ is given as follows: 
\begin{gather}
		\label{eq:post_z}
		P(z_{i,j,s} = k^* \given \bm{z}_{\bs (ijs)}, \bm{m})
		\propto 
		\left\{
				\begin{array}{ll}
						m_{k^*}^{\bs (ijs)}
						\times
						\frac{
								\alpha_1 + m_{i,\cdot,k^*}^{\bs (ijs)}
						}{
								I\alpha_1 + m_{\cdot,\cdot,k^*}^{\bs (ijs)}
						}
						\times 
						\frac{
								\alpha_2 + m_{\cdot,j,k^*}^{\bs (ijs)}
						}{
								I\alpha_2 + m_{\cdot,\cdot,k^*}^{\bs (ijs)}
						}
						& \mbox{if $m_{\cdot,\cdot,k^*}^{\bs (ijs)} > 0$},\\
						\gamma_0 
						\times 
						\frac{1}{I}
						\times
						\frac{1}{J}
						& \mbox{if $m_{\cdot,\cdot,k^*}^{\bs (ijs)} = 0$},
				\end{array}
				\right.
\end{gather}
where the superscript $\bs (ijs)$ denotes that the corresponding statistics 
are computed excluding the $s$-th customer of entry $(i,j)$. 

\paragraph{Sampling $\bm{m}$:}
Given $\bm{z}$, posteriors for the $\bm{\phi}$ and $\bm{\psi}$ are simulated as 
$\{\phi_{i,k}\}_{i=1}^I \given - \sim \mbox{Dirichlet}(\{\alpha_1+m_{i,\cdot,k}\}_{i=1}^I)$ 
and 
$\{\psi_{j,k}\}_{j=1}^J \given - \sim \mbox{Dirichlet}(\{\alpha_2+m_{\cdot,j,k}\}_{j=1}^J)$ 
for $k \in \{1,\ldots,K_+\}$. 
Furthermore, the posterior sampling of the $\lambda_k$ for $K_+$ active atoms 
can be performed as $\lambda_k \given - \sim \mbox{Gamma}(m_{\cdot,\cdot,k}, c_0 + 1)$. 
Therefore, similar to the sampler for the EPM~\cite{EPM}, 
we can update $\bm{m}$ as follows: 
\begin{gather}
		m_{i,j,\cdot} \given \bm{\phi}, \bm{\psi}, \bm{\lambda} \sim 
		\left\{
				\begin{array}{ll}
						\delta(0) & \mbox{if $x_{i,j}=0$}, \\
						\mbox{ZTP} (\sum_{k=1}^{K_+} \phi_{i,k} \psi_{j,k} \lambda_k) & \mbox{if $x_{i,j}=1$}, 
				\end{array}
		\right. \\
		\{m_{i,j,k}\}_{k=1}^{K_+} \given m_{i,j,\cdot}, \bm{\phi}, \bm{\psi}, \bm{\lambda} \sim \mbox{Multinomial}\left(
		m_{i,j,\cdot};
		\left\{
		\frac{
				\phi_{i,k} \psi_{j,k} \lambda_k
		}{
				\sum_{k'=1}^{K_+} \phi_{i,k'} \psi_{j,k'} \lambda_{k'}
		}
		\right\}_{k=1}^{K_+}
		\right)
		,
\end{gather}
where $\delta(0)$ denotes point mass at zero. 

\paragraph{Sampling hyperparameters:}
We can construct closed-form Gibbs sampler for all hyperparameters of 
the IDEPM 
assuming a gamma prior ($\mbox{Gamma}(e_0, f_0)$). 
Using the additive property of the $\Gamma$P, 
posterior sample for the sum of $\lambda_k$ over unused atoms is obtained as 
$\lambda_{\gamma_0} = \sum_{k' = K_+ +1}^\infty \lambda_{k'} \given - \sim \mbox{Gamma}(\gamma_0, c_0+1)$. 
Consequently, we obtain a closed-form posterior sampler for the rate parameter $c_0$ of the $\Gamma$P as 
$c_0 \given - \sim \mbox{Gamma}(e_0 + \gamma_0, f_0+\lambda_{\gamma_0}+\sum_{k=1}^{K_+} \lambda_k)$. 
For all remaining hyperparameters (i.e., $\alpha_1$, $\alpha_2$, and $\gamma_0$), 
we can derive posterior samplers from Eq.~\eqref{eq:depm_mll_infinite} using 
\emph{data augmentation} 
techniques~\cite{ESCOBAR1994,EPM,TEH2006,NEWMAN2009} (detailed in Appendix~\ref{asec:hyper_idepm}).

\section{Experimental Results}
\label{sec:exp}
%
In previous sections, 
we theoretically analysed the reason why the model shrinkage of the EPM 
does not work appropriately (Sec.~\ref{sec:analysis}) 
and proposed several novel constructions (i.e., CEPM, DEPM, and IDEPM) 
of the EPM with an efficient model shrinkage effect (Sec.~\ref{sec:proposal}). 

The purpose of the experiments 
involves ascertaining the following hypotheses: 
\begin{itemize}
		\item [(H1)] The original EPM overestimates the number of active atoms 
				and overfits the data. 
				In contrast, the model shrinkage mechanisms of the CEPM and DEPM 
				work appropriately. 
				Consequently, the CEPM and DEPM outperform the EPM 
				in generalization ability and link prediction accuracy. 
		\item [(H2)] Compared with the CEPM, the DEPM indicates 
				better generalization ability and link prediction accuracy 
				because of the uniqueness of the DEPM's optimal solution. 
		\item [(H3)] The IDEPM with collapsed Gibbs sampler 
				is superior to the DEPM in 
				generalization ability, link prediction accuracy, 
				mixing efficiency, and convergence speed. 
\end{itemize}

\paragraph{Datasets:}
The first dataset was the Enron~\cite{ENRON} dataset, 
which comprises e-mails sent between 149 Enron employees. 
We extracted e-mail transactions from 
September 2001 and constructed Enron09 dataset. 
For this dataset, $x_{i,j}=1(0)$ was used to indicate whether an e-mail was, or was not, 
sent by the $i$-th employee to the $j$-th employee. 
For larger dataset, we used the MovieLens~\cite{MOVIELENS} dataset, 
which comprises five-point scale ratings of movies submitted by users. 
For this dataset, we set $x_{i,j}=1$ when the rating was higher than three 
and $x_{i,j}=0$ otherwise. 
We prepared two different sized MovieLens dataset: MovieLens100K (943 users and 1,682 movies) 
and MovieLens1M (6,040 users and 3,706 movies). 
The densities of the 
Enron09, MovieLens100K and MovieLens1M datasets were 
0.016, 0.035, and 0.026, respectively. 

\paragraph{Evaluating Measures:}
We adopted three measurements to evaluate the performance of the models. 
The first is the \emph{estimated number of active atoms $K$} for 
evaluating the model shrinkage effect of each model. 
The second is the averaged \emph{Test Data Log Likelihood} (TDLL) for 
evaluating the generalization ability of each model. 
We calculated the averaged likelihood that a test entry takes the actual value. 
For the third measurement, 
as many real-world binary matrices are often sparse, 
we adopted the \emph{Test Data Area Under the Curve of the Precision-Recall curve} (TDAUC-PR)~\cite{Davis2006} 
to evaluate the link prediction ability. 
In order to calculate the TDLL and TDAUC-PR, 
we set all the selected test entries as zero during the inference period, 
because binary observations for unobserved entries are not 
observed as missing values but are observed as zeros in many real-world situations. 

\paragraph{Experimental Settings:}
Posterior inference for the truncated models (i.e., EPM, CEPM, and DEPM) 
were performed using standard (non-collapsed) Gibbs sampler. 
Posterior inference for the IDEPM was performed 
using the collapsed Gibbs sampler derived in Sec.~\ref{subsec:idepm}. 
For all models, we also sampled all hyperparameters assuming the same gamma prior 
($\mbox{Gamma}(e_0,f_0)$). 
For the purpose of fair comparison, 
we set 
hyper-hyperparameters as $e_0 = f_0 = 0.01$ throughout the experiments. 
We ran 600 Gibbs iterations for each model on each dataset and 
used the final 100 iterations 
to calculate the measurements. 
Furthermore, all reported measurements were averaged values obtained by 
10-fold cross validation. 

\paragraph{Results:}
\begin{figure}[tb]
		\centering
		\includegraphics[width=1.0\columnwidth,clip]{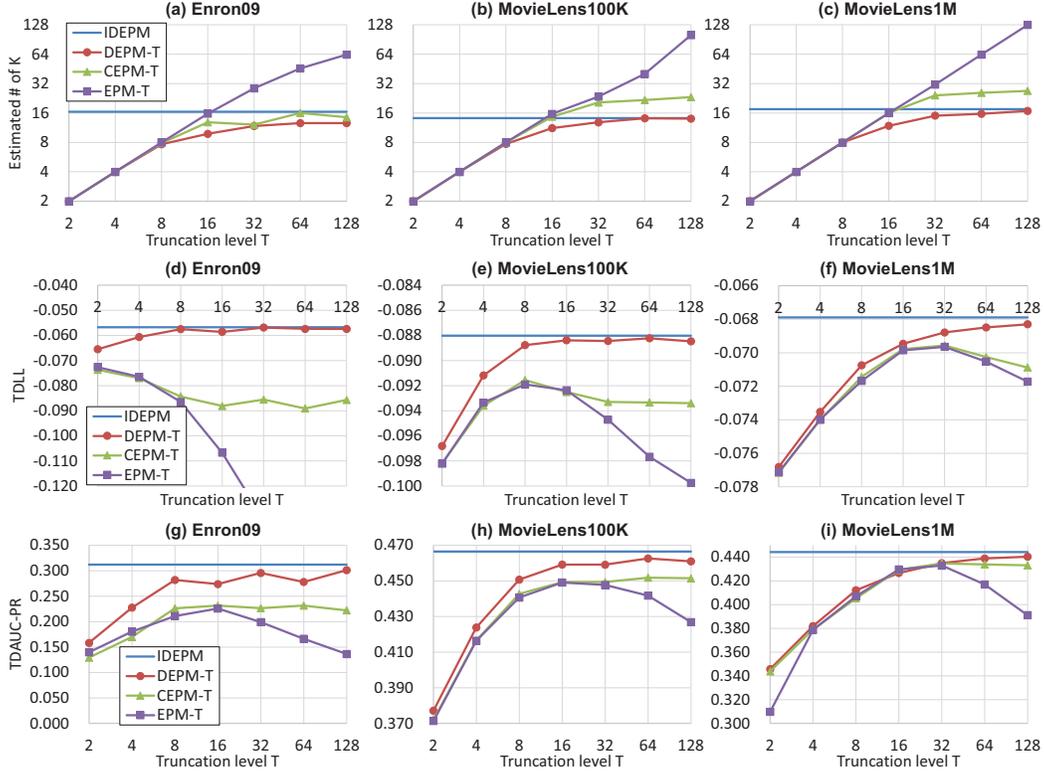}
		\caption{
				Calculated measurements 
				as functions of the truncation level $T$ 
				for each dataset. 
				The horizontal line in each figure denotes the result 
				obtained using the IDEPM. 
		}
		\label{fig:T}
\end{figure}
\begin{figure}[tb]
		\centering
		\includegraphics[width=1.0\columnwidth,clip]{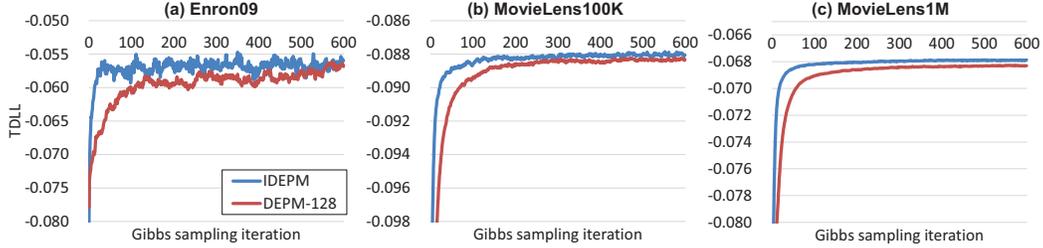}
		\caption{
				(Best viewed in color.) 
				The TDLL as a function of the Gibbs iterations. 
		}
		\label{fig:TDLL-ITR}
\end{figure}
\begin{figure}[tb]
		\centering
		\includegraphics[width=1.0\columnwidth,clip]{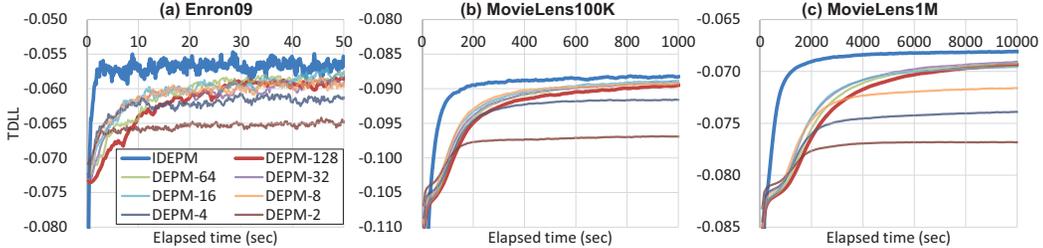}
		\caption{
				(Best viewed in color.) 
				The TDLL as a function of the elapsed time (in seconds). 
		}
		\label{fig:TDLL-TIME}
\end{figure}
Hereafter, 
the truncated models are denoted as 
EPM-$T$, CEPM-$T$, and DEPM-$T$ to specify the truncation level $T$. 
Figure~\ref{fig:T} shows the calculated measurements. 

(H1) 
As shown in Figs.~\ref{fig:T}a--c, 
the EPM overestimated the number of active atoms $K$ for all datasets 
especially for a large truncation level $T$. 
In contrast, 
the number of active atoms $K$ for the 
CEPM-$T$ and DEPM-$T$ monotonically 
converges to a specific value. 
This result supports the analysis with respect to the relationship between 
the model shrinkage effect and the expectation of the EPM's intensity function, 
as discussed in Sec.~\ref{sec:analysis}. 
Consequently, 
as shown by the TDLL (Figs.~\ref{fig:T}d--f) and TDAUC-PR (Figs.~\ref{fig:T}g--i), 
the CEPM and DEPM outperformed the original EPM 
in both generalization ability and link prediction accuracy.

(H2) 
As shown in Figs.~\ref{fig:T}a--c, 
the model shrinkage effect of the DEPM is stronger than that of the CEPM. 
As a result, the DEPM significantly outperformed the CEPM 
in both generalization ability and 
link prediction accuracy (Figs.~\ref{fig:T}d--i). 
Although the CEPM slightly outperformed the EPM, 
the CEPM with a larger $T$ tends to overfit the data. 
In contrast, the DEPM indicated its best performance with 
the largest truncation level ($T=128$). 
Therefore, we confirmed that the uniqueness of the optimal solution 
in the DEPM was considerably important in achieving 
good generalization ability and link prediction accuracy. 


(H3) 
As shown by the horizontal lines in Figs.~\ref{fig:T}d--i, 
the IDEPM 
indicated the state-of-the-art scores for all datasets. 
Finally, 
the computational efficiency 
of the IDEPM was compared with that of the truncated DEPM. 
Figure~\ref{fig:TDLL-ITR} shows the TDLL as a function of the number of Gibbs iterations. 
In keeping with expectations, 
the IDEPM indicated significantly better mixing property when compared with 
that of the DEPM for all datasets. 
Furthermore, Fig.~\ref{fig:TDLL-TIME} shows a comparison of the 
convergence speed of the IDEPM and DEPM with several truncation levels 
($T=\{2,4,8,16,32,64,128\}$). 
As clearly shown in the figure, the convergence of the IDEPM was 
significantly faster than that of the DEPM with all truncation levels. 
Therefore, 
we confirmed that the IDEPM 
indicated a state-of-the-art performance in 
generalization ability, link prediction accuracy, 
mixing efficiency, and convergence speed.

\section{Conclusions}
In this paper, we analysed the model shrinkage effect 
of the EPM, which is a Bayesian nonparametric model for extracting 
overlapping structure 
with an optimal dimension from binary matrices. 
We derived the expectation of the intensity function of the EPM, 
and 
showed that the redundancy of the EPM's intensity function 
disturbs its model shrinkage effect. 
According to this finding, 
we proposed two novel generative construction for the EPM 
(i.e., CEPM and DEPM) to ensure that its model shrinkage effect works appropriately. 
Furthermore, 
we derived a truly infinite version of the DEPM (i.e, IDEPM), 
which can be inferred using 
collapsed Gibbs sampler without any approximation for the $\Gamma$P. 
We experimentally showed that 
the model shrinkage mechanism of the CEPM and DEPM worked appropriately. 
Furthermore, we confirmed that 
the proposed IDEPM indicated a 
state-of-the-art performance in generalization ability, 
link prediction accuracy, mixing efficiency, and convergence speed. 
It is of interest to further investigate whether the 
truly infinite construction of the IDEPM can be applied 
to more complex and modern machine learning models, 
including deep brief networks~\cite{Zhou2015}, 
and tensor factorization models~\cite{Hu2015UAI}.





\appendix

\section{Appendix: Gibbs Samplers for the EPM}
\label{asec:epm}

\subsection{Model Description}
The full description of the generative model for the EPM~\cite{EPM} is described as follows: 
\begin{gather}
		x_{i,j} = \double{I}(m_{i,j,\cdot}\geq 1), ~~
		m_{i,j,\cdot} \given \bm{U}, \bm{V}, \bm{\lambda} \sim \mbox{Poisson}\left( 
		\sum_{k=1}^K 
		U_{i,k} V_{j,k} \lambda_k 
		\right)
		, \nonumber\\
		\label{aeq:epm}
		U_{i,k} \sim \mbox{Gamma}(a_1, b_1), ~~ 
		U_{j,k} \sim \mbox{Gamma}(a_2, b_2), ~~ 
		\lambda_k \sim \mbox{Gamma}(\gamma_0/T, c_0). 
\end{gather}

\subsection{Closed-form Gibbs Samplers}
Posterior inference for all parameters and hyperparameters of the EPM 
can be performed using Gibbs sampler. 

\paragraph{Sampling $\bm{m}$:}
From Eq.~\eqref{aeq:epm}, 
as $m_{i,j,\cdot} = 0$ if and only if $x_{i,j} = 0$, posterior sampling of $\bm{m}$ 
is required only for non-zero entries ($x_{i,j}=1$), 
and can be performed using 
zero-truncated Poisson (ZTP) distribution~\cite{Geyer2007} 
as follows: 
\begin{align}
		\label{aeq:epm_sample_mij}
		m_{i,j,\cdot} \given \bm{U}, \bm{\lambda}, \bm{V} \sim 
		\left\{
				\begin{array}{ll}
						\delta(0) & \mbox{if $x_{i,j}=0$}, \\
						\mbox{ZTP} (\sum_{k=1}^{T} U_{i,k} \lambda_k V_{j,k}) & \mbox{if $x_{i,j}=1$}. 
				\end{array}
		\right.
\end{align}
Then, 
latent count $m_{i,j,k}$ related to the $k$-th atom 
can be obtained by partitioning $m_{i,j,\cdot}$ into $T$ atoms as 
\begin{align}
		\label{aeq:epm_sample_mijk}
		\{m_{i,j,k}\}_{k=1}^T \given m_{i,j,\cdot}, \bm{U}, \bm{\lambda}, \bm{V} \sim \mbox{Multinomial}\left(
		m_{i,j,\cdot};
		\left\{
		\frac{
				U_{i,k} \lambda_k V_{j,k}
		}{
				\sum_{k'=1}^T U_{i,k'} \lambda_{k'} V_{j,k'} 
		}
		\right\}_{k=1}^T
		\right). 
\end{align}

\paragraph{Sampling $\bm{U},\bm{V},\bm{\lambda}$:}
As the generative model for $m_{i,j,k}$ can be given as 
$m_{i,j,k} \given \bm{U}, \bm{V}, \bm{\lambda} \sim \mbox{Poisson}(U_{i,k} V_{j,k} \lambda_k)$, 
according to the additive property of the Poisson distributions, 
generative models for aggregated counts also can be expressed as follows: 
\begin{align}
		\label{aeq:gen_mik}
		m_{i,\cdot,k} = ({\scriptstyle \sum_j} m_{i,j,k}) \given \bm{U}, \bm{V}, \bm{\lambda} 
		\sim \mbox{Poisson}(U_{i,k} ({\scriptstyle \sum_j} V_{j,k}) \lambda_k), \\
		\label{aeq:gen_mjk}
		m_{\cdot,j,k} = ({\scriptstyle \sum_i} m_{i,j,k}) \given \bm{U}, \bm{V}, \bm{\lambda} 
		\sim \mbox{Poisson}(({\scriptstyle \sum_i} U_{i,k}) V_{j,k} \lambda_k), \\
		\label{aeq:gen_lambdak}
		m_{\cdot,\cdot,k} = ({\scriptstyle \sum_i \sum_j} m_{i,j,k}) \given \bm{U}, \bm{V}, \bm{\lambda} 
		\sim \mbox{Poisson}(({\scriptstyle \sum_i} U_{i,k}) ({\scriptstyle \sum_j} V_{j,k}) \lambda_k). 
\end{align}
Therefore, thanks to the conjugacy between Poisson and gamma distributions, 
posterior samplers for $\bm{U}$, $\bm{V}$, and $\bm{\lambda}$ are straightforwardly derived as follows: 
\begin{gather}
		U_{i,k} \given - \sim \mbox{Gamma}(a_1 + m_{i,\cdot,k}, b_1+({\scriptstyle \sum_j} V_{j,k}) \lambda_k), \\
		V_{j,k} \given - \sim \mbox{Gamma}(a_2 + m_{\cdot,j,k}, b_2+({\scriptstyle \sum_i} U_{i,k}) \lambda_k), \\
		\label{aeq:epm_sample_lambdak}
		\lambda_k \given - \sim \mbox{Gamma}(\gamma_0/T + m_{\cdot,\cdot,k}, c_0+({\scriptstyle \sum_i} U_{i,k}) ({\scriptstyle \sum_j} V_{j,k})). 
\end{gather}

\subsection{Sampling Hyperparameters}
\label{asubsec:hyper}
\paragraph{Sampling $b_1,b_2,c_0$:}
Thanks to the conjugacy between gamma distributions, 
posterior samplers for $b_1$, $b_2$, and $c_0$ are straightforwardly performed as follows: 
\begin{align}
		b_1 \given - &\sim \mbox{Gamma}(e_0 + I T a_1, f_0 + {\scriptstyle \sum_i} {\scriptstyle \sum_k} \phi_{i,k}), \\
		b_2 \given - &\sim \mbox{Gamma}(e_0 + J T a_2, f_0 + {\scriptstyle \sum_j} {\scriptstyle \sum_k} \psi_{j,k}), \\
		\label{aeq:epm_sampler_c0}
		c_0 \given - &\sim \mbox{Gamma}(e_0 + \gamma_0, f_0 + {\scriptstyle \sum_k} \lambda_k). 
\end{align}

For the remaining hyperparameters (i.e., $a_1$, $a_2$, and $\gamma_0$), 
we can construct closed-form Gibbs samplers using data augmentation techniques~\cite{ESCOBAR1994,EPM,TEH2006,NEWMAN2009}, 
that consider an expanded probability over target and some auxiliary variables. 
The key strategy is the use of the following expansions: 
\begin{align}
		\label{eq:expansion1}
		\frac{\Gamma(u)}{\Gamma(u+n)} =& 
		\frac{B(u,n)}{\Gamma(n)} = 
		\Gamma(n)^{-1}\int_0^1 v^{u-1} (1-v)^{n-1} \mathrm{d}v ,\\ 
		\label{eq:expansion2}
		\frac{\Gamma(u+n)}{\Gamma(u)} =& \sum_{w=0}^{n} S(n,w) u^w , 
\end{align}
where $B(\cdot,\cdot)$ is the beta function and 
$S(\cdot,\cdot)$ is the Stirling number of the first kind. 

\paragraph{Sampling $a_1,a_2$:}
For shape parameter $a_1$, 
marginalizing $\bm{U}$ from Eq.~\eqref{aeq:gen_mik}, 
we have a partially marginalized likelihood related to target variable $a_1$ as: 
\begin{align}
		\label{aeq:pmll_mik}
		P(\{ m_{i,\cdot,k} \}_{i,k} \given \bm{V}, \bm{\lambda}) &\propto 
		\prod_{k=1}^T 
		\left\{
		\left( 
		\frac{b_1}{b_1+({\scriptstyle \sum_j} V_{j,k}) \lambda_k}
		\right)^{I a_1} 
		\prod_{i=1}^I 
		\frac{\Gamma(a_1 + m_{i,\cdot,k})}{\Gamma(a_1)}
		\right\}
		.
\end{align}
Therefore, expanding Eq.~\eqref{aeq:pmll_mik} using Eq.~\eqref{eq:expansion2} 
and assuming gamma prior as $a_1 \sim \mbox{Gamma}(e_0,f_0)$, 
posterior sampling for $a_1$ can be performed as follows: 
\begin{gather}
		w_{i,k} \given - \sim \mbox{Antoniak}(m_{i,\cdot,k}, a_1), \\
		a_1 \given - \sim 
		\mbox{Gamma}\left(e_0 + {\scriptstyle \sum_i}{\scriptstyle \sum_k} w_{i,k}, 
		f_0 - I \times {\scriptstyle \sum_k} \ln \frac{b_1}{b_1 + ({\scriptstyle \sum_j} V_{j,k})} 
		\right), 
\end{gather}
where $\mbox{Antoniak}(m_{i,\cdot,k}, a_1)$ is an Antoniak distribution~\cite{ANTONIAK}. 
This is the distribution of the number of occupied tables if $m_{i,\cdot,k}$ customers are 
assigned to one of an infinite number of tables using the Chinese restaurant process (CRP)~\cite{CRP1,CRP2} 
with concentration parameter $a_1$, 
and is sampled as 
$w_{i,k} = \sum_{p=1}^{m_{i,\cdot,k}} w_{i,k,p}, w_{i,k,p} \sim \mbox{Bernoulli}\left(\frac{a_1}{a_1+p-1}\right)$. 
Similarly, posterior sampler for $a_2$ can be derived from 
Eqs.~\eqref{aeq:gen_mjk} and \eqref{eq:expansion2} (omitted for brevity). 

\paragraph{Sampling $\gamma_0$:}
Similar to the samplers for $a_1$ and $a_2$, 
according to Eqs.~\eqref{aeq:gen_lambdak} and \eqref{eq:expansion2}, 
$\gamma_0$ can be updated as follows: 
\begin{gather}
		\label{aeq:epm_sampler_gamma0_1}
		w_k \given - \sim \mbox{Antoniak}(m_{\cdot,\cdot,k}, \gamma_0/T), \\
		\label{aeq:epm_sampler_gamma0_2}
		\gamma_0 \given - \sim \mbox{Gamma}\left(
		e_0 + {\scriptstyle \sum_k} w_k, 
		f_0 - \frac{1}{T} 
		{\scriptstyle \sum_k} \ln 
		\frac{c_0}{c_0+({\scriptstyle \sum_i} U_{i,k})({\scriptstyle \sum_j} V_{j,k})}
		\right). 
\end{gather}

\section{Appendix: Gibbs Samplers for the CEPM}
\label{asec:cepm}
Posterior inference for the CEPM can be performed using Gibbs sampler as same as that for the EPM. 
However, only $a_1$ and $a_2$ do not have closed-form sampler because of introduced constraints 
$b_1 = C_1 \times a_1$ and $b_2 = C_2 \times a_2$. 
Therefore, instead of sampling from true posterior, 
we use the grid Gibbs sampler~\cite{Zhou2014} to sample from a discrete probability distribution 
\begin{gather}
		P(a_1 \given -) \propto \mbox{Eq~\eqref{aeq:pmll_mik}} \times P(a_1) 
\end{gather}
over a grid of points $\frac{1}{1+a_1}=0.01,0.02,\ldots,0.99$. 
Note that $a_2$ can be sampled in a same way as $a_1$ (omitted for brevity).

\section{Appendix: Gibbs Samplers for the DEPM}
\label{asec:depm}

\subsection{Closed-form Gibbs Samplers}
\paragraph{Sampling $\bm{\phi}, \bm{\psi}$:}
Given $m_{\cdot,\cdot,k} = \sum_i \sum_j m_{i,j,k}$, 
generative process for latent count $m_{i,\cdot,k}$ can be expressed as 
\begin{gather}
		\label{aeq:gen_mik_depm}
		\{m_{i,\cdot,k}\}_{i=1}^I \given m_{\cdot,\cdot,k}, \bm{\phi}, \bm{\psi}, \bm{\lambda} 
		\sim 
		\mbox{Multinomial}
		\left(
		m_{\cdot,\cdot,k}; 
		\{
				\phi_{i,k} 
		\}_{i=1}^I
		\right)
		.
\end{gather}
Thanks to conjugacy between Eq.~\eqref{aeq:gen_mik_depm} and Dirichlet prior in Eq.~\eqref{eq:gen_depm}, 
posterior sampling for $\bm{\phi}$ can be performed as 
\begin{gather}
		\{ \phi_{i,k} \}_{i=1}^I \given - \sim \mbox{Dirichlet}(\{\alpha_1 + m_{i,\cdot,k}\}_{i=1}^I). 
\end{gather}
Similarly, $\bm{\psi}$ can be updated as 
\begin{gather}
		\{ \psi_{j,k} \}_{j=1}^J \given - \sim \mbox{Dirichlet}(\{\alpha_2 + m_{\cdot,j,k}\}_{j=1}^J). 
\end{gather}

\paragraph{Sampling $\bm{m}, \bm{\lambda}$:}
Posterior samplers for remaining latent variables $\bm{m}$ and $\bm{\lambda}$ are 
straightforwardly given from Eqs.~\eqref{aeq:epm_sample_mij}, \eqref{aeq:epm_sample_mijk}, and 
\eqref{aeq:epm_sample_lambdak} by replacing $\bm{U}$ and $\bm{V}$ with $\bm{\phi}$ and $\bm{\psi}$, respectively.

\subsection{Sampling Hyperparameters}
\label{asubsec:depm_hyper}
\paragraph{Sampling $\alpha_1, \alpha_2$:}
Similar to Appendix~\ref{asubsec:hyper}, 
marginalizing $\bm{\phi}$ out from Eq.~\eqref{eq:gen_depm} and 
expanding the marginal likelihood using Eqs.~\eqref{eq:expansion1} and \eqref{eq:expansion2}, 
posterior sampling for $\alpha_1$ can be derived as follows: 
\begin{gather}
		v_{1,k} \given - \sim \mbox{Beta}(I \alpha_1, m_{\cdot,\cdot,k}), \\ 
		w_{1,i,k} \given - \sim \mbox{Antoniak}(m_{i,\cdot,k}, \alpha_1), \\
		\alpha_1 \given - \sim \mbox{Gamma}(e_0+{\scriptstyle \sum_i}{\scriptstyle \sum_k} w_{1,i,k}, 
		f_0-I\times {\scriptstyle \sum_k} \ln v_{1,k}). 
\end{gather}
Note that the posterior sampler for $\alpha_2$ can be derived in same way (omitted for brevity). 

\paragraph{Sampling $\gamma_0, c_0$:}
The remaining hyperparameters (i.e., $\gamma_0$ and $c_0$) 
can be updated as same as in the EPM. 
Similar to the sampler for the EPM, 
$c_0$ can be updated using Eq.~\eqref{aeq:epm_sampler_c0}. 
Finally, posterior sampler for $\gamma_0$ can be derived as 
\begin{gather}
		\label{aeq:depm_sampler_gamma0_1}
		w_k \given - \sim \mbox{Antoniak}(m_{\cdot,\cdot,k}, \gamma_0/T), \\
		\label{aeq:depm_sampler_gamma0_2}
		\gamma_0 \given - \sim \mbox{Gamma}\left(
		e_0 + {\scriptstyle \sum_k} w_k, 
		f_0 - \ln 
		\frac{c_0}{c_0+1}
		\right). 
\end{gather}

\section{Appendix: Proof of Theorem~\ref{theorem4}}
\label{asec:pot4}
Considering a joint distribution for $m_{i,j,\cdot}$ customers and 
their assignments $\bm{z}_{i,j}=\{z_{i,j,s}\}_{s=1}^{m_{i,j,\cdot}} \in \{1,\cdots,T\}^{m_{i,j,\cdot}}$ to $T$ tables, 
we have following lemma for the truncated DEPM: 
\begin{lemma}
		\label{lemma1}
		The joint distribution over $\bm{m}$ and $\bm{z}$ for the DEPM is expressed by a fully 
		factorized form as 
		\begin{align}
				\label{eq:depm_ll}
				P(\bm{m}, \bm{z} \given \bm{\phi}, \bm{\psi}, \bm{\lambda})
				&=
				\prod_{i=1}^I \prod_{j=1}^J
				\frac{1}{m_{i,j,\cdot}!} 
				\times 
				\prod_{i=1}^I \prod_{k=1}^T 
				\phi_{i,k}^{m_{i,\cdot,k}}
				\times
				\prod_{j=1}^J \prod_{k=1}^T 
				\psi_{j,k}^{m_{\cdot,j,k}}
				\times
				\prod_{k=1}^T
				\lambda_k^{m_{\cdot,\cdot,k}}
				e^{-\lambda_k}. 
		\end{align}
\end{lemma}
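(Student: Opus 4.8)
The plan is to build the joint distribution one entry at a time and then multiply across all $(i,j)$, relying crucially on the $l_1$-constraints $\sum_i \phi_{i,k} = 1$ and $\sum_j \psi_{j,k} = 1$ to collapse the cross terms. First I would write the single-entry joint $P(m_{i,j,\cdot}, \bm{z}_{i,j} \given \bm{\phi}, \bm{\psi}, \bm{\lambda})$ as the Poisson probability for the count times the product of per-customer assignment probabilities, exactly as in the factorization displayed just before Lemma~\ref{lemma1}. The Poisson factor contributes $\frac{(\sum_k \phi_{i,k}\psi_{j,k}\lambda_k)^{m_{i,j,\cdot}}}{m_{i,j,\cdot}!} e^{-\sum_k \phi_{i,k}\psi_{j,k}\lambda_k}$. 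Because conditioning on the total of a sum of independent Poisson variables yields a multinomial allocation to the summands, each customer is assigned to table $k$ with probability $\phi_{i,k}\psi_{j,k}\lambda_k / \sum_{k'} \phi_{i,k'}\psi_{j,k'}\lambda_{k'}$.

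Next I would collect the assignment factors. Writing $m_{i,j,k}$ for the number of customers of entry $(i,j)$ sent to table $k$, the product over the $m_{i,j,\cdot}$ customers becomes $\prod_k (\phi_{i,k}\psi_{j,k}\lambda_k)^{m_{i,j,k}}$ in the numerator and $(\sum_{k'} \phi_{i,k'}\psi_{j,k'}\lambda_{k'})^{m_{i,j,\cdot}}$ in the denominator. Multiplying by the Poisson factor, the multinomial normalizer cancels exactly against the $(\sum_k \phi_{i,k}\psi_{j,k}\lambda_k)^{m_{i,j,\cdot}}$ coming from the Poisson numerator, leaving the clean single-entry form $\frac{1}{m_{i,j,\cdot}!} e^{-\sum_k \phi_{i,k}\psi_{j,k}\lambda_k} \prod_k (\phi_{i,k}\psi_{j,k}\lambda_k)^{m_{i,j,k}}$.

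Then I would take the product over all entries $(i,j)$ and regroup. The factorial and power terms regroup routinely: summing $m_{i,j,k}$ over $j$ produces $m_{i,\cdot,k}$ for the $\phi$ exponents, over $i$ produces $m_{\cdot,j,k}$ for the $\psi$ exponents, and over both produces $m_{\cdot,\cdot,k}$ for the $\lambda$ exponents, yielding the three product blocks in Eq.~\eqref{eq:depm_ll}. \textbf{The step I expect to be the crux} is handling the exponential term: the product over $(i,j)$ gives $\exp\bigl(-\sum_k \lambda_k (\sum_i \phi_{i,k})(\sum_j \psi_{j,k})\bigr)$, and it is only here that the $l_1$-constraints enter, forcing $(\sum_i \phi_{i,k})(\sum_j \psi_{j,k}) = 1$ so that the exponential collapses to $\prod_k e^{-\lambda_k}$. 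Without the simplex constraints this factor would retain the coupling through $(\sum_i \phi_{i,k})(\sum_j \psi_{j,k})$ and the likelihood would fail to factorize, which is precisely why the constrained parametrization of Eq.~\eqref{eq:depm} is indispensable for the fully factorized form that later permits marginalizing $\bm{\phi}$, $\bm{\psi}$, and $\bm{\lambda}$ in Theorem~\ref{theorem4}.
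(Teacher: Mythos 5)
Your proposal is correct and follows essentially the same route as the paper's proof: write the per-entry joint as the Poisson count probability times the per-customer categorical assignment probabilities, cancel the normalizer against the Poisson numerator, regroup exponents into $m_{i,\cdot,k}$, $m_{\cdot,j,k}$, $m_{\cdot,\cdot,k}$, and invoke the $l_1$-constraints to collapse $e^{-\lambda_k(\sum_i\phi_{i,k})(\sum_j\psi_{j,k})}$ to $e^{-\lambda_k}$. You correctly identify the substitution $\sum_i\phi_{i,k}=\sum_j\psi_{j,k}=1$ as the step where the constrained parametrization does the work, which is exactly the point the paper's proof emphasizes.
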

\begin{proof}
		As the likelihood functions $P(m_{i,j,\cdot}\given \bm{\phi}, \bm{\psi}, \bm{\lambda})$ and 
		$P(z_{i,j,s}\given m_{i,j,\cdot}, \bm{\phi}, \bm{\psi}, \bm{\lambda})$ are given as 
		\begin{gather}
				P(m_{i,j,\cdot} \given \bm{\phi},\bm{\psi},\bm{\lambda}) = 
				\frac{1}{m_{i,j,\cdot}!} 
				\left(\sum_{k=1}^T \phi_{i,k} \psi_{j,k} \lambda_k \right)^{m_{i,j,\cdot}}
				e^{-\sum_{k=1}^T \phi_{i,k} \psi_{j,k} \lambda_k}, \\
				P(z_{i,j,s}=k^* \given m_{i,j,\cdot}, \bm{\phi},\bm{\psi},\bm{\lambda}) = 
				\frac{
						\phi_{i,k^*} \psi_{j,k^*} \lambda_{k^*}
				}{
						\sum_{k'=1}^T \phi_{i,k'} \psi_{j,k'} \lambda_{k'}
				}, 
		\end{gather}
		respectively, we obtain the joint likelihood function for $\bm{m}$ and $\bm{z}$ as follows: 
		\begin{align}
				\label{eq:depm_ll_tmp}
				\MoveEqLeft P(\bm{m}, \bm{z} \given \bm{\phi}, \bm{\psi}, \bm{\lambda}) \nonumber\\
				=& 
				\prod_{i=1}^I \prod_{j=1}^J 
				\left\{
						P(m_{i,j,\cdot} \given \bm{\phi}, \bm{\psi}, \bm{\lambda}) 
						\prod_{s=1}^{m_{i,j,\cdot}} P(z_{i,j,s} \given m_{i,j,\cdot}, \bm{\phi},\bm{\psi},\bm{\lambda})
				\right\}
				\nonumber\\
				=&
				\prod_{i=1}^I \prod_{j=1}^J \frac{1}{m_{i,j,\cdot}!} 
				\times 
				\prod_{i=1}^I \prod_{k=1}^T 
				\phi_{i,k}^{m_{i,\cdot,k}}
				\times
				\prod_{j=1}^J \prod_{k=1}^T 
				\psi_{j,k}^{m_{\cdot,j,k}} 
				\times
				\prod_{k=1}^T
				\lambda_k^{m_{\cdot,\cdot,k}}
				e^{-\lambda_k ({\scriptstyle \sum_i} \phi_{i,k}) ({\scriptstyle \sum_j} \psi_{j,k})}. 
		\end{align}
		Thanks to the $l_1$-constraints for $\bm{\phi}$ and $\bm{\psi}$ we introduced in Eq.~\eqref{eq:depm}, 
		substituting ${\scriptstyle \sum_i} \phi_{i,k} = {\scriptstyle \sum_j} \psi_{j,k} = 1$ for Eq.~\eqref{eq:depm_ll_tmp}, 
		we obtain Eq.~\eqref{eq:depm_ll} in Lemma~\ref{lemma1}. 
\end{proof}

Thanks to the conjugacy between Eq.~\eqref{eq:depm_ll} in Lemma~\ref{lemma1} and 
prior construction in Eq.~\eqref{eq:gen_depm}, 
marginalizing $\bm{\phi}$, $\bm{\psi}$, and $\bm{\lambda}$ out, 
we obtain the following marginal likelihood for the DEPM: 
\begin{align}
		\label{eq:depm_mll}
		P(\bm{m},\bm{z})
		=&
		\prod_{i=1}^I \prod_{j=1}^J 
		\frac{1}{m_{i,j,\cdot}!} 
		\times 
		\prod_{k=1}^T
		\frac{
				\Gamma(I \alpha_1)
		}{
				\Gamma(I \alpha_1 + m_{\cdot,\cdot,k})
		}
		\prod_{i=1}^I 
		\frac{
				\Gamma(\alpha_1 + m_{i,\cdot,k})
		}{
				\Gamma(\alpha_1) 
		} 
		\nonumber\\
		&
		\times 
		\prod_{k=1}^T
		\frac{
				\Gamma(J \alpha_2)
		}{
				\Gamma(J \alpha_2 + m_{\cdot,\cdot,k})
		}
		\prod_{j=1}^J 
		\frac{
				\Gamma(\alpha_2 + m_{\cdot,j,k})
		}{
				\Gamma(\alpha_2) 
		}
		\times
		\prod_{k=1}^T
		\frac{
				\Gamma\left(
				\frac{\gamma_0}{T} + m_{\cdot,\cdot,k}
				\right)
				c_0^{\frac{\gamma_0}{T}}
		}{
				\Gamma\left(
				\frac{\gamma_0}{T}
				\right)
				(c_0 + 1)^{\frac{\gamma_0}{T}+m_{\cdot,\cdot,k}}
		}. 
\end{align}

Considering a partition $[\bm{z}]$ instead of the assignments $\bm{z}$ as same as in~\cite{Griffiths2011}, 
the marginal likelihood function $P(\bm{m},[\bm{z}])$ for a partition of the truncated DEPM can be expressed as 
\begin{align}
		\label{eq:depm_mll_partition}
		P(\bm{m},[\bm{z}]) =& 
		\frac{T!}{(T-K_+)!} P(\bm{m},\bm{z}) \nonumber\\ 
		=& 
		\prod_{i=1}^I \prod_{j=1}^J 
		\frac{1}{m_{i,j,\cdot}!} 
		\times 
		\prod_{k=1}^{K_+}
		\frac{
				\Gamma(I \alpha_1)
		}{
				\Gamma(I \alpha_1 + m_{\cdot,\cdot,k})
		}
		\prod_{i=1}^I 
		\frac{
				\Gamma(\alpha_1 + m_{i,\cdot,k})
		}{
				\Gamma(\alpha_1) 
		} 
		\nonumber\\
		&
		\times 
		\prod_{k=1}^{K_+}
		\frac{
				\Gamma(J \alpha_2)
		}{
				\Gamma(J \alpha_2 + m_{\cdot,\cdot,k})
		}
		\prod_{j=1}^J 
		\frac{
				\Gamma(\alpha_2 + m_{\cdot,j,k})
		}{
				\Gamma(\alpha_2) 
		} \nonumber\\
		&
		\times 
		\frac{T!}{(T-K_+)! T^{K_+}} 
		\times 
		\gamma_0^{K_+} \left( \frac{c_0}{c_0+1} \right)^{\gamma_0} 
		\prod_{k=1}^{K_+} 
		\frac{ \prod_{l=1}^{m_{\cdot,\cdot,k} -1} (l+\gamma_0/T)}{(c_0+1)^{m_{\cdot,\cdot,k}}}. 
\end{align}
Therefore, taking $T\rightarrow \infty$ in Eq.~\eqref{eq:depm_mll_partition}, 
we obtain the marginal likelihood function for the truly infinite DEPM (i.e., IDEPM) 
as in Eq.~\eqref{eq:depm_mll_infinite} of Theorem~\ref{theorem4}.

\section{Appendix: Sampling Hyperparameters for the IDEPM}
\label{asec:hyper_idepm}

\paragraph{Sampling $\alpha_1, \alpha_2$:}
Posterior samplers for $\alpha_1$ and $\alpha_2$ of the IDEPM are equivalent to those of the truncated DEPM 
as in Appendix~\ref{asubsec:depm_hyper}.

\paragraph{Sampling $\gamma_0$:}
From Eq.~\eqref{eq:depm_mll_infinite}, 
we straightforwardly obtain the posterior sampler for $\gamma_0$ as 
\begin{gather}
		\gamma_0 \given - \sim \mbox{Gamma}\left( e_0 + K_+, f_0 - \ln \frac{c_0}{c_0+1} \right). 
\end{gather}
Note that $\gamma_0$ in Eq.~\eqref{eq:depm_mll_infinite} can be marginalized out assuming gamma prior. 
However, we explicitly sample $\gamma_0$ for simplicity in this paper. 

\paragraph{Sampling $c_0$:}
As derived in Sec.~\ref{subsec:idepm} of main article, 
$c_0$ is updated as 
\begin{gather}
		\lambda_k \given - \sim \mbox{Gamma}(m_{\cdot,\cdot,k}, c_0+1)~~~~k \in \{1,\ldots,K_+\}, \\
		\lambda_{\gamma_0}  \given - \sim \mbox{Gamma}(\gamma_0, c_0+1), \\
		c_0 \given - \sim \mbox{Gamma}(e_0 + \gamma_0, f_0 + \lambda_{\gamma_0}+ {\scriptstyle \sum_{k=1}^{K_+}} \lambda_k). 
\end{gather}

\printbibliography

\end{document}